\newcommand{\RV}[1]{\textcolor{black}{#1}}
\title{Diffusion-based Molecule Generation
with Informative Prior Bridges}
\author{%
  Lemeng Wu\thanks{Equal contribution} \\
  University of Texas at Austin\\
  \texttt{lmwu@cs.utexas.edu} \\
  \And
  Chengyue Gong$\tiny{^*}$ \\
  University of Texas at Austin\\
  \texttt{cygong@cs.utexas.edu} \\
  \AND
  Xingchao Liu \\
  University of Texas at Austin\\
  \texttt{xcliu@cs.utexas.edu} \\
  \And
  Mao Ye \\
  University of Texas at Austin\\
  \texttt{my21@cs.utexas.edu} \\
  \And
  Qiang Liu \\
  University of Texas at Austin\\
  \texttt{lqiang@cs.utexas.edu} \\
}
\begin{document}

\maketitle

\begin{abstract}
  AI-based molecule generation provides a promising approach to a large area of  biomedical sciences and engineering, such as
  antibody design, hydrolase engineering, or vaccine development.
  Because the molecules are governed by physical laws,
  a key challenge is to incorporate prior information into
  the training procedure to generate high-quality and realistic molecules.
  We propose a simple and novel approach to steer the training of
  diffusion-based generative models with physical and statistics prior information. This is achieved by constructing physically informed
  diffusion bridges, stochastic processes that guarantee to yield a given observation at the fixed terminal time.
  We develop a Lyapunov function based method to construct and determine bridges, and propose a number of proposals of informative prior bridges
  for both high-quality molecule generation and uniformity-promoted 3D point cloud generation.
With comprehensive experiments,
we show that our method provides a powerful approach to the 3D generation task, yielding molecule structures with better quality and stability scores and more uniformly distributed point clouds of high qualities.

\end{abstract}

\section{Introduction}
As exemplified by the success of AlphafoldV2 \cite{jumper2021highly}
in solving protein folding,
deep learning techniques
have been creating new frontiers
on molecular sciences \cite{wang2017accurate}.
In particular, the problem of building deep generative models
for molecule design has attracted increasing interest with
 a magnitude of applications in physics, chemistry, and drug discovery  \cite{alcalde2006environmental,anand2022protein,lu2022machine}.
Recently, diffusion-based generative model have been applied to molecule generation problems \cite{de2021diffusion,hoogeboom2022equivariant} and obtain superior performance.
The idea of these methods is to corrupt the data with  diffusion noise and learn a neural diffusion model to revert the corruption process to
generate meaningful data from noise.
A key challenge in deep generative models for molecule and 3D point generation is to efficiently incorporate strong prior information to reflect the physical and problem-dependent statistical properties of the problems at hand.
In fact, a recent fruitful line of research \cite{du2022molgensurvey,liu2021pre, satorras2021n, gong2022fill}
have shown promising results by introducing inductive bias
into the design of model architectures to reflect physical constraints such as SE(3) equivariance.
In this work, we present a different paradigm
of prior incorporation tailored to diffusion-based generative models,
and leverage it to yield substantial improvement in both
1) high-quality and stable molecule generation and 2) uniformity-promoted point cloud generation.
Our contributions are summarized as follows.

\textbf{Prior Guided Learning of Diffusion Models.}
We introduce a simple and flexible framework for injecting informative problem-dependent prior and physical information when learning diffusion-based generative models.
The idea is to elicit and inject prior information regarding how the diffusion process should look like for generating each given data point,
and train the neural diffusion model to imitate the prior processes.
The prior information is presented in the form of diffusion bridges
which are diffusion processes that are guaranteed to generate each data point at the fixed terminal time.
We provide a general Lyapunov approach for constructing and determining bridges and leverage it to develop a way to systematically incorporate prior information into bridge processes.

\textbf{Physics-informed Molecule Generation.} We apply our method to molecule generation. We propose a number of energy functions for incorporating physical and statistical prior information. Compared with existing  physics-informed
molecule
generation methods \cite{de2021diffusion, gnaneshwar2022score,luo2022equivariant, gnaneshwar2022score},
our method modifies the training process, rather than imposing constraints on the model architecture.
Experiments show that  our method achieves current state-of-the-art generation quality and stability on multiple test benchmarks of molecule generation.

\textbf{Uniformity-promoting Point Generation.}
A challenging task in physical simulation,
graphics, 3D vision is to generate
point clouds for representing real objects ~\cite{achlioptas2018learning,cai2020learning,luo2021diffusion,yang2019pointflow}. A largely overlooked problem of existing approaches is that they tend to generate
unevenly distributed points, which lead to unrealistic shapes and make the subsequent processing and applications, such as mesh generation, challenging and inefficient.
In this work, we leverage our framework to introduce uniformity-promoting forces into the prior bridge of diffusion generative models. This yields a simple and efficient approach to generating regular and realistic point clouds in terms of both shape and point distribution.

\section{Related works}

\textbf{Diffuse Bridge Process.}
Diffusion-based generative  models~\cite{ho2020denoising,liu2022bridge,sohl2015deep,song2020denoising, vargas2021solving} have achieved great successes in various AI generation  tasks recently;
these methods leverage a time reversion technique and can be viewed as learning variants auto-encoders with diffusion processes as encoders and decoders.
Schrodinger bridges \cite{ chen2021likelihood,de2021diffusion,wang2021deep}
 have also been proposed for learning diffusion generative models that guarantee to output desirable outputs in a finite time interval, but %
these methods involve iterative proportional fittings and are computationally costly.
Our framework of learning generative models with diffusion bridges
is similar to that of \cite{peluchetti2022nondenoising},
which learn diffusion models as a mixture of forward-time diffusion bridges to avoid the time-reversal technique of \cite{song2020score}.
But our framework is designed to incorporate physical prior into bridges
and develop a systematic approach for constructing
a broad class of prior-informed bridges.

\textbf{3D Molecule Generation.}
Generating molecule in 3D space has been gaining increasing interest.
A line of works ~\cite{luo2021predicting,mansimov2019molecular,shi2021confgf,simm2019generative,xu2021learning,xu2021end,xu2022geodiff} consider conditional conformal generation, which takes the 2D SMILE structure as conditional input and generate the 3D molecule conformations condition on the input. Another series of works \cite{NIPS2019_8974, hoogeboom2022equivariant, luo20223d,satorras2021n,wu2022score} focus on directly generating the atom position and type for the molecule unconditionally.
For these series of works, improvements usually come from architecture design and loss design.
For example, %
G-Schnet~\cite{NIPS2019_8974} auto-regressively generates the atom position and type one by one after another; EN-Flow ~\cite{satorras2021n} and EDM~\cite{hoogeboom2022equivariant} adopt E(n) equivariant graph neural network (EGNN) \cite{satorras2021n} to train flow-based model and diffusion model. These methods aim at generating valid and natural molecules in 3D space and outperform previous approaches by a large margin.
Our work provides a very different approach to
incorporating the physical information for molecule generation
by injecting the prior information into the diffusion process,
rather than neural network architectures.

\textbf{Point Cloud Generation.}
A vast literature has been devoted to learning
deep generative models for real-world 3D objects in the form of point clouds. %
\cite{achlioptas2018learning} first proposed to generate the point cloud by generating a latent code and training a decoder to generate point clouds from the latent code.
Build upon this approach,
methods have been developed using flow-based generative models \cite{yang2019pointflow}
and diffusion-base models \cite{cai2020learning, luo2021diffusion, luo2022equivariant}.
However, the existing works miss a key important prior information:
the points in a point cloud tend to distribute regularly and uniformly.
Ignoring this information causes poor generation quality.
By introducing uniformity-promoting forces
in  diffusion bridges,
we obtain a simple and efficient approach to generating
regular and realistic point clouds.

\section{Method}
\label{sec:method}

We first introduce the definition of diffusion generative models and discuss how to learn these models with prior bridges.
After introducing the training algorithm for deep diffusion generative models, we discuss the energy functions that we apply to molecules and point cloud examples.

\subsection{Learning Diffusion Generative Models with Prior Bridges} 

\textbf{Problem Definition.}
We aim at learning a generative model given a dataset $\{x\datak\}_{k=1}^n$ drawn from an unknown distribution $\tg$ on $\RR^d$.  
A diffusion model on time interval $[0,\t]$ is
$$
\P^\theta \colon ~~~~~\d Z_t = s_t^\theta(Z_t) \dt + \sigma_t(Z_t) \d W_t,
~~~ \forall t\in[0,\t], ~~ 
~~~ Z_0 \sim \mu_0, 
$$
where $W_t$ is a standard Brownian motion; 
$\sigma_t\colon \RR^d\to \RR^{d\times d}$ 
is a positive definition covariance coefficient; 
$s^\theta_t\colon \RR^d\to \RR^{d}$ is  parameterized as a neural network with parameter $\theta$, and $\mu_0$ is the initialization. Here we 
use $\P^\theta$ to denote the distribution of the whole  trajectory $Z=\{Z_t\colon t\in[0,1] \}$, and $\P^\theta_t$ the marginal distribution of $Z_t$ at time $t$. 
We want to learn the parameter $\theta$ such that the distribution $\P_\t^\theta$  of the terminal state $Z_\t$ equals the data distribution $\tg$. 

\textbf{Learning Diffusion Models.}
There are an infinite number of diffusion processes $\P^\theta$ that yield the same terminal distribution but have different distributions of latent trajectories $Z$. 
Hence, it is important to 
inject problem-dependent prior information 
into the learning procedure to obtain a model $\P^\theta$ that simulate the data for the problem at hand fast and accurately.  
To achieve this, we elicit an 
\emph{imputation} process $\Q^x$ for each $x\in \RR^d$, 
such that a draw $Z\sim \Q^x$ 
yields trajectories that 1) are consistent with $x$ in that $\Z_\t = x$ deterministically, and 
2) reflect important physical and statistical prior information on the problem at hand.

Formally, if $\Q^x(\Z_\t = x) = 1$, we call that $\Q^x$ is a bridge process pinned at end point $x$, or simply an $x$-bridge. Assume we first generate a data point $x\sim \tg$, and then draw a bridge  $Z\sim \Q^x$ pinned at $x$, then the distribution of $Z$ is a mixture of $\Q^x$ with $x$ drawn from the data distribution: $\Q^{\tg} := \int \Q^x(\cdot) \tg(\dx)$. 

A key property of $\Q^\tg$ is that its terminal distribution equals the data distribution, i.e., $\Q_\t^\tg = \tg$.  Therefore, we can  learn the diffusion model $\P^\theta$ by fitting the trajectories drawn from $\Q^\tg$ with the ``backward'' procedure above. This can be formulated by maximum likelihood or equivalently minimizing the KL divergence: 
$$
\min_{\theta}\left \{\L(\theta)\defeq \KL(\Q^\tg~||~ \P^\theta) \right\}.
$$
Furthermore, assume that the bridge $\Q^x$ is a diffusion model of form  
\begin{align}
    \Q^x \colon ~~~~
    \d Z_t = b_t(Z_t ~|~ x) \dt + \sigma_t(Z_t) \d W_t, ~~~ \Z_0 \sim \mu_0,
    \label{equ:Qx}
\end{align}
where $b_t(Z_t~|~x)$ is an $x$-dependent drift term need to carefully designed to both satisfy the bridge condition and incorporate important prior information (see Section~\ref{sec:lyap}).  
Assuming this is done, using Girsanov theorem \cite{mao2007stochastic}, 
the loss function $\L(\theta)$ can be reformed into a form ofdenoised score matching loss of \cite{song2020denoising,song2020score, song2021maximum}: 
\bbb \label{equ:mainloss} 
\L(\theta) 
= \E_{Z \sim \Q^\tg} 
\left [
\frac{1}{2} \int_0^\t \norm{\sigma(Z_t)^{-1}(s_t^\theta(Z_t) - b_t(Z_t~|~Z_\t))}_2^2 \dt \right ] + \mathrm{const},  
\eee  
which is a score matching term between $s^\theta$ and $b$.
The const term contains the log-likelihood for the initial distribution $\mu_0$, which is a const in our problem.
Here $\theta\true$ is an global optimum of $\L(\theta)$ if 
$$
s^\thetat_t(z) = 
\E_{Z\sim \Q^\tg} [b_t( z | Z_\t)~|~ Z_t = z].
$$
This means that %
the drift term $s_t^\theta$ should be matched with the conditional expectation of $b_t(z | x)$ with $x= Z_\t$ conditioned on $Z_t = z$.

\begin{rem}
The SMLD can be viewed as a special case of this framework when we take $\Q^x$ to be a time-scaled Brownian bridge process: 
\bbb \label{equ:ztdfd}
\Q^{x, \mathrm{bb}}: && 
\d Z_t = \sigma_t^2\frac{x-Z_t}{\beta_\t - \beta_t} \dt + \sigma_t \d W_t,~~~~  
Z_0 \sim \normal(x, \beta_\t), 
\eee  
where $\sigma_t \in [0,+\infty)$ and $\beta_t = \int_0^t \sigma_s^2 \d s$. 
This can be seen by the fact that the time-reversed process $\rev Z_{t} \defeq Z_{1-t}$ follows the simple time-scaled Brownian motion $\d \rev Z_t = \sigma_{\t -t} \d \rev W_t$ starting from the data point $\rev Z_0 = x$, where $\rev W_t$ is another standard Brownian motion. 
The Brownian bridge achieves $Z_\t = x$ because the magnitude of the drift force is increasing to infinite when $t$ is close to time $\T.$ \\ 
\end{rem}

However, 
the  bridge of SMLD above is a relative simple and uninformative process and does not incorporate problem-dependent prior information into the learning procedure.  
This is also the case of the other standard diffusion-based models \cite{song2020score}, such as denoising diffusion probabilistic models (DDPM) which can be shown to use a bridge constructed from an Ornstein–Uhlenbeck process. 
We refer the readers to \cite{peluchetti2022nondenoising}, 
which provides a similar forward time bridge framework 
for learning diffusion models, 
and it recovers the bridges in SMLD and DDPM as a conditioned stochastic process derived using the $h$-transform technique \cite{doob1984classical}. However, 
the $h$-transform method 
is limited to elementary stochastic processes 
that have an explicit formula of the transition probabilities, and can not incorporate complex physical statistical prior information. 
Our work strikes to construct and use a broader class of  more complex 
bridge processes that both reflect problem-dependent prior knowledge and satisfy the endpoint condition $\Q^x(Z_\t = x)=1$. 
This necessitate systematic techniques for constructing a large family of bridges,  as we pursuit in Section~\ref{sec:lyap}.
 
\subsection{Designing Informative Prior Bridges} 
\label{sec:lyap} 
The key to realizing the general prior-informed learning framework above is to have a general and user-friendly technique to design $\Q^x$ in \eqref{equ:Qx} to ensure the bridge condition $\Q^x(Z_\t = x) = 1$ while 
leaving the flexibility of incorporating rich prior information.  
To achieve this, we first develop a general  criterion of bridges based on a \emph{Lyapunov function method} which allows us to identify a very general form of bridge processes; we   
then propose a particularly simple family of bridges that we use in practice by introducing modification to  Brownian bridges. 

\begin{mydef}[\textbf{Lyapunov Functions}]  \label{def:lyap}
A function $U_t(z)$ is said to be a Lyapunov function for set $A\subset \RR^d$ at time $t=\t$ if  
$U_\t (z) \geq 0$  for $\forall z\in \RR^d$ and $U_\t(z) = 0$ if and only if $z \in A$.
\end{mydef} 
Intuitively, a diffusion process $\Q$ is a bridge $A$, i.e., $\Q(\Z_\t \in A) =1$, if it (at least) approximately follows the gradient flow of a Lyapunov function and the magnitude (or step size) or the gradient flow should increase with a proper magnitude in order to ensure that $\Z_t \in A$ at the terminal time $t=\t$. Therefore, we identify a general form of bridges to $A$ as follows: 
 \bbb \label{equ:duz}
 \Q^A:&&
 \d Z_t =\left( -\alpha_t \dd_z U_t(Z_t) + \nu_t(Z_t)\right ) \dt 
 + \sigma_t(\X_t) \dW_t,~~~~~~~t\in[0,1],~~~ Z_0 \sim \mu_0, 
 \eee 
  where $\alpha_t>0$ is the step size of the gradient flow of $U$ and $\nu$ is an extra perturbation term. 
 The step size  $\alpha_t$ should increase to infinity as $t\to \T$ sufficiently fast to dominate the effect of the diffusion term $\sigma_t \d W_t$ and the perturbation $\nu_t \d t$ term to ensure that $U$ is minimized at time $t = \t$. 
 
 \begin{theoremEnd}[\isproofhere]{pro}
 \label{pro:lya}
 Assume $U_t(z) = U(z,t)$ is a Lyapunov function 
 of a measurable set $A$ at time $\t$ and $U(\cdot, t) \in C^2(\RR^d)$ and $U(z, \cdot) \in C^1([0,\T]).$ %
Then, $\Q^A$ in \eqref{equ:duz}  is an bridge to $A$, i.e., 
$\Q^A(Z_\T \in A) = 1$, if the following holds: %

 1) $U$ follows an (expected) Polyak-Lojasiewicz condition: $\E_{\Q^A}[U_t(Z_t)] -\norm{\dd_z U_t(Z_t)}^2]\leq 0, \forall t.$
 
 2) Let $\beta_t = \E_{\Q^A}[\dd_z U_t(Z_t)\tt \nu_t(Z_t)]$, and  $\gamma_t = \E_{\Q^A}[\partial_t U_t(Z_t) + \frac{1}{2} \trace(\dd_z^2 U_t(Z_t) \sigma^2_t(Z_t))]$, and  
 $\zeta_t = \exp(\int_0^t \alpha_s \d s)$. Then 
$\lim_{t\uparrow 1} \zeta_t = +\infty,$ and 
$\lim_{t\uparrow 1} \frac{\zeta_t}{\int_0^t \zeta_s (\beta_s + \gamma_s) \d s} = +\infty. $
\end{theoremEnd}   
\begin{proofEnd}
It is a direct result of Theorem~\ref{thm:lyap2}. 
\end{proofEnd}
Brownian bridge can be viewed as the case when $U_t(z) =\norm{x-z}^2/2$ and $\alpha_t = \sigma_t^2/(\beta_\t-\beta_t)$, and $\nu = 0$.  Hence simply introducing an extra drift term into bridge bridge yields that a broad family of bridges to $x$: 
\bbb  \label{equ:qxbbf}
\Q^{x, \mathrm{bb}, f}: && 
\d Z_t = \left ( \sigma_t f_t(Z_t) + \sigma_t^2\frac{x-Z_t}{\beta_\t - \beta_t} \right) \dt + \sigma_t \d W_t,~~~~ Z_0 \sim \mu_0.
\eee  
 In Appendix~\ref{thm:perturb} and \ref{app:cor5},
we show that $\Q^{x, \mathrm{bb}, f}$  is a bridge to $x$ if $\E_{\Q^{x,\mathrm{bb}}}[\norm{f_t(Z_t)}^2]<+\infty$ and  $\sigma_t>0,\forall t$, which is very mild condition and is satisfied for most practical functions. 
The intuition is that the Brownian drift $\sigma^2_t\frac{x-Z_t}{\beta_1-\beta_t}$ is singular and grows to infinite as $t$ approaches $1$. 
Hence, introducing an $f$ into the drift would not change of the final bridge condition, unless $f$ is also singular and has a magnitude that dominates  the Brownian bridge drift as $t\to 1$.  

To make the model $\P^\theta$  compatible 
with the physical force $f$, 
we assume the learnable drift has a form of $s_t^\theta(z) = \alpha f_t(z) + \tilde s_t^\theta(z)$ where $\tilde s$ is a neural network (typically a GNN) and $\alpha$ can be another learnable parameter or a pre-defined parameter. 
Please refer to algorithm \ref{alg:learning} and Figure \ref{fig:algorithm_demo} for descriptions about our practical algorithm.

\begin{algorithm}[h] 
\label{alg:learning}
\caption{Learning diffusion generative models.} %
\begin{algorithmic}
\STATE \textbf{Input}: 
Given a dataset $\{x\datak\}$, $\Q^x$ the bridge in \eqref{equ:qxbbf}, and a problem-dependent prior force $f$,v %
and a diffusion model $\P^\theta$. 
\STATE \textbf{Training}: Estimate $\theta$ by minimizing $\L(\theta)$ in  \eqref{equ:mainloss} with stochastic gradient descent and time discretization. 
\STATE \textbf{Sampling}: Simulate from $\P^\theta$.
\end{algorithmic}
\end{algorithm}

\begin{figure}[t]
    \centering
    \vspace{-5pt}
    \includegraphics[width=1.\linewidth]{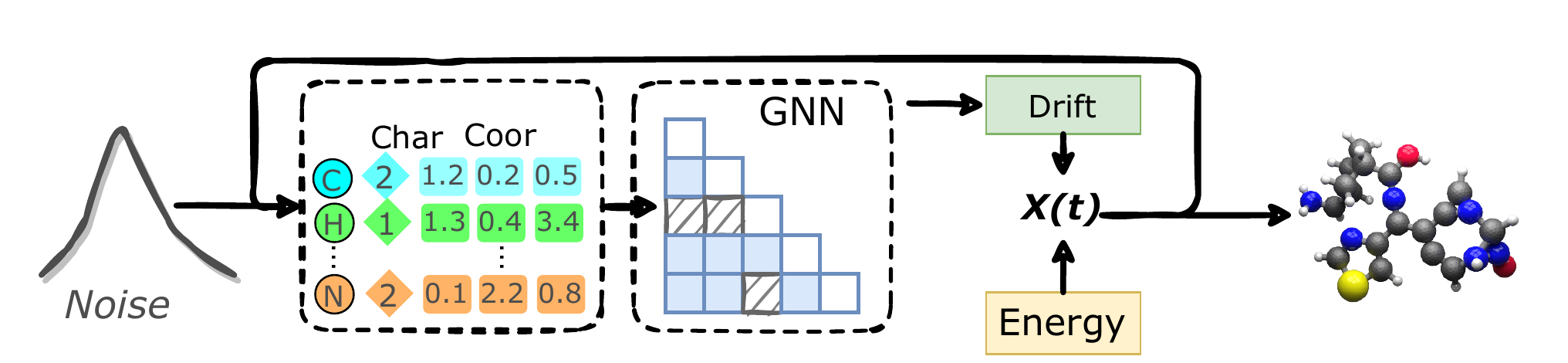}

    \caption{\small{An overview of our training pipeline with molecule generation as an example. Initialized from a given distribution, we pass the data through the network multiple times, and finally get the meaningful output.}}
    \label{fig:algorithm_demo}

\end{figure}
 
\section{Molecule and 3D Generation with Informative Prior Bridges}

We apply our method to the molecule 
generation as well as point cloud generation. 
Informative physical or statistical priors
that reflects the underlying real physical structures 
can be particularly beneficial for molecule generation  as we show in experiments. 

In our problem, each data point $x$ is a collection of
 atoms of different types, more generally marked points, in 3D Euclidean space.  
In particular, we have $x = [x^r_i, x^h_i]_{i=1}^m$, 
where $x^r_i \in \RR^3$ is the coordinate of the $i$-th atom, and $x^h_i \in \{e_1,\ldots, e_k\}$ where each $e_i=[0\cdots 1\cdots0]$ is the $i$-th basis vector of $\RR^k$, which  
indicates the type of the $i$-th atom of $k$ categories.
To apply the  diffusion generative model, we treat $x^h_i$ as a continuous vector in $\RR^r$ and round it to the closest basis vector when we want to output a final result or have computations that depend on atom types (e.g., calculating an energy function as we do in sequel). 
Specifically, for a continuous $x^h_i\in \RR^k$, we denote by $\hat x^h_i = \ind(x^h_i = \max(x^h_i))$ the discrete type rounded from it by taking the type with the maximum value. 
To incorporate priors, we design an energy function $\eng(x)$ and incorporate $f_t( \cdot) =-\dd \eng( \cdot)$  into the Brownian bridge \eqref{equ:qxbbf} to guide the training process. 
We discuss different choices of $\eng$ in the following.

\subsection{Prior Bridges for Molecule Generation}

Previous prior guided molecule or protein 3D structure generation usually depends on pre-defined energy or force \cite{ luo2021predicting,xu2022geodiff}.
We introduce our two potential energies. One is formulated inspired by previous works in biology, and the other is an $k$ nearest neighbour statistics directly obtained from the data.

\paragraph{AMBER Inspired Physical Energy.}
AMBER \cite{duan2003point} is a family of force fields for molecule simulation. It is designed to provide a computationally efficient tool for modern chemistry-molecular dynamics and free energy calculations. 
It consists of a number of important forces, including 
the bond energy, angular energy, torsional energy, 
the van der Waals energy and  
the Coulomb energy. 
Inspired by AMBER,  
we propose to incorporate the following energy term into the bridge process: 
\begin{align}
    \eng(x) = E_{bond}(x) + E_{angle}(x) + E_{LJ}(x) + E_{Coulomb}(x). 
    \label{eq:our_amber_energy}
\end{align}
$\bullet$ The bond energy is $ E_{bond}(x) = \sum_{ij\in bond(x)} (\mathrm{Len}(x^r_{ij})- \ell(\hat x^h_i, \hat x^h_j))^2$, where 
$\mathrm{Len}(x^r_{ij}) = \norm{x^r_i - x^r_j}$, and 
$bond(x)$ denotes the set of 
bonds from $x$, which is set to be the set of atom pairs with a distance smaller than $1.15$ times the covalent radius; the $\ell^0(r, c)$ denotes the expected bond length between atom type $r$ and $c$, which we calculate as side information from the training data.

$\bullet$ 
The angle energy 
is $ E_{angle}(x) = \sum_{ijk\in angle(x)} 
(\mathrm{Ang}({x^r_{ijk}})- \omega^0(\hat x^h_{ijk}))^2$, 
where $angle(x)$ 
denotes the set of 
angles between two neighbour bonds in $bound(x)$, and    
$\mathrm{Ang}({x^r_{ijk}})$ denotes the angle formed by vector $x^r_i-x^r_j$ and $x^r_k - x^r_j$, and  $\omega^0(\hat x^h_{ijk})$ is the expected angle between atoms of type $\hat x^h_i$, $\hat x^h_j$, $\hat x^h_k$, which we calculate as side information from the training data.

$\bullet$ 
The Lennard-Jones (LJ) energy is defined by $\eng_{LJ}(x) = \sum_{i\neq j} e(\norm{x^r_i - x^r_j})$ and $e(\ell) = (\sigma/\ell)^{12} - 2(\sigma/\ell)^6$. 
The parameter $\sigma$ is an approximation for average nucleus distance.

$\bullet$ 
The nuclei-nuclei repulsion (Coulomb) electromagnetic potential energy is $E_{Coulomb}(x) = \kappa \sum_{ij}q(\hat x^h_i) q(\hat x^h_j)/\norm{x^r_i-x^r_j}$, where $\kappa$ is Coulomb constant and  
$q(r)$ denotes the point charge of atom of type $r$, which depends on the number of protons.

\textbf{Statistical Energy. }
When accurate physic laws are unavailable, 
molecular geometric statistics, 
such as bond lengths, bond angles, and torsional angles, etc, 
can be directly calculated from the data and shed important insights on the system  \cite{cornell1995second, jorgensen1996development, manaa2002decomposition}. 
We propose to design a prior energy function in bridges by directly calculate these statistics over the dataset. %

Specifically, we assume that the lengths and angles of each type of bond follows a Gaussian distribution that we learn from the dataset, and define the energy function as the negative log-likelihood:
\begin{align}
E_{stat}(x) 
= \sum_{ij\in knn(x)} 
\frac{1}{\hat \sigma_{\hat x^h_{ij}}^2}
\norm{\mathrm{Len}(x^r_{ij}) - \hat \mu_{\hat x^h_{ij}}}^2 
+ \sum_{ij,jk\in knn(x)} 
\frac{1}{\sigma_{\hat x^h_{ijk}}^2}
\norm{\mathrm{Ang}(x^r_{ijk}) - \mu_{\hat x^h_{ijk}}}^2,
\label{eq:our_energy}
\end{align}
where $knn(x)$ denotes the K-nearest neighborhood graph constructed based on the distance matrix of $x$; 
for  each pair of atom types $r,c\in[k]$, 
$\hat \mu_{rc}$ and $\hat \sigma_{rc}^2$ denotes 
empirical mean and variance of length of $rc$-edges in the dataset; for each triplet $r,c,r'\in[k]$, 
$\hat \mu_{rcr'}$ and $\hat \sigma_{rcr'}^2$ is 
the empirical mean and variance of angle betwen $rc$ and $cr'$ bonds. 

Intuitively, depending on the atom type and order of the nearest neighbour, we force the atom distance and angle to mimic the statistics calculated from the data. 
We thus implicitly capture different kinds of interaction forces. 
Compared with the AMBER energy, 
the statistical energy \eqref{eq:our_energy} is simpler and more adaptive to 
the dataset of interest.

\subsection{Prior Bridges for Point Cloud Generation}
We design prior forces for 
3D point cloud generation, which is similar to molecule generation except that the points are un-typed so we only have the coordinates $\{x_i^r\}$. 
One important aspect of point cloud generation is to distribute points uniformly on the surface, which is important for producing high-quality meshes and other post-hoc geometry applications  and manipulations.  %

\textbf{Riesz Energy.}
One idea to make the point distribute uniformly is adding a repulsive force to separate the points apart from each other. We achieve this by minimizing the Riesz energy~\cite{gotz2003riesz}, 
\begin{align}
        \eng_{\mathrm{Riesz}}(x) = \frac{1}{2} \sum_{j \neq i} ||{x}^r_{i} - {x}^r_{j}||^{-2}.
        \label{eq:riesz_energy}
\end{align}
\textbf{KNN Distance Energy.}
Similar to molecule design, we directly calculate the average distance between each point and its k nearest neighbour neighbour, and define the following energy: 
\begin{align}
    \eng_\mathrm{knn}(x) = \sum_i \left(\text{knn-dist}_i(x^r) -  \mu_{knn}\right)^2, 
    \label{eq:knn_energy}
\end{align}
where $\text{knn-dist}_i(x) = \frac{1}{K} \sum_{j \in \mathcal{N}_K(x_i;x)} ||x^r_i - x^r_j||^2 $ denotes the
average distance from $x_i$ to its $K$ nearest neighbors, and $\mu_{knn}$ is the empirical mean of $\text{knn-dist}_i(x)$ in the dataset. 
This would encourage the points to have similar average nearest neighbor distance and yield uniform distribution between points. 
In common geometric setups, the valence of the point on the surface is 4, which means we set $k = 4$.

\section{Experiment}
\label{sec:exp}
We verify the advantages of our proposed method (Bridge with Priors) in several different domains.
We first compare our method with advanced generators (\emph{e.g.}, diffusion model, normalizing flow, etc.) on molecule generation tasks.
We then implement our method on point cloud generations, which targets producing generated samples in  a higher quality.
We directly compare the performance and also analyze the difference between our energy prior and other energies we discuss in Section \ref{sec:method}.

\subsection{Force Guided Molecule Generation}

To demonstrate the efficiency and effectiveness of our bridge processes and physical energy, we conduct experiments on molecule and macro-molecule generation experiments.
We follow \cite{luo2022equivariant} in settings and observe that our proposed prior bridge processes consistently improve the state-of-the-art performance. Diving deeper, we analyze the impact of different energy terms and hyperparameters.

\textbf{Metrics.} Following \cite{hoogeboom2022equivariant,satorras2021n}, we use the atom and molecular stability score to 
measure the model performance.
The atom stability is the proportion of atoms that have the right valency while the molecular stability stands for the proportion of generated
molecules for which all atoms are stable.
For visualization, we use the distance between pairs of atoms and the atom types to predict bond types, which is a common practice.
To demonstrate that our force does not only memorize the data in the dataset, we further calculate and report the RDKit-based \cite{landrum2013rdkit} novelty score.
we extracted 10,000 samples to calculate the above metrics.

\begin{table}[h]
    \centering
    \caption{\small{Results of our method and several baselines on QM9 and GEOM-DRUG. For QM9, we additionally report the `Novelty' score evaluated by RDKit \cite{landrum2013rdkit} to show that our method can generate novel molecules. \RV{We evaluate the percentage of valid and unique molecules out of 12000 generated molecules.}}
    }
    \scalebox{0.7}{
    \begin{tabular}{l|cccc|cc}
    \hline
     & \multicolumn{4}{c|}{QM9}  & \multicolumn{2}{c}{GEOM-DRUG}  \\
    \hline
    & Atom Sta (\%) $\uparrow$ & Mol Sta (\%) $\uparrow$ & Novelty (\%) $\uparrow$ & \RV{Valid + Unique $\uparrow$} & Atom Sta (\%) $\uparrow$ & Mol Sta (\%) $\uparrow$ \\
    \hline
    EN-Flow \cite{satorras2021n} & 85.0  & 4.9 & 81.4  & \RV{0.349} & 75.0 & 0.0 \\
    GDM \cite{hoogeboom2022equivariant} & 97.0 & 63.2 & 74.6  & \RV{-} & 75.0 & 0.0 \\
    E-GDM \cite{hoogeboom2022equivariant} & \bf{98.7$\pm$0.1} & 82.0$\pm$0.4 & 65.7$\pm$0.2 & \RV{0.902} & 81.3 & 0.0 \\
    \hline 
    Bridge & \bf{98.7$\pm$0.1} & 81.8$\pm$0.2 & 66.0$\pm$0.2 & \RV{0.902} & 81.0$\pm$0.7 & 0.0 \\
    Bridge + Force \eqref{eq:our_energy} & \bf{98.8$\pm$0.1} & \bf{84.6$\pm$0.3} & 68.8$\pm$0.2 & \RV{0.907} & \bf{82.4$\pm$0.8} & 0.0 \\
    \hline
    \end{tabular}}
    \label{tab:mol_main}
\end{table}

\begin{figure}[ht]
    \centering
    \includegraphics[width=1.0\textwidth]{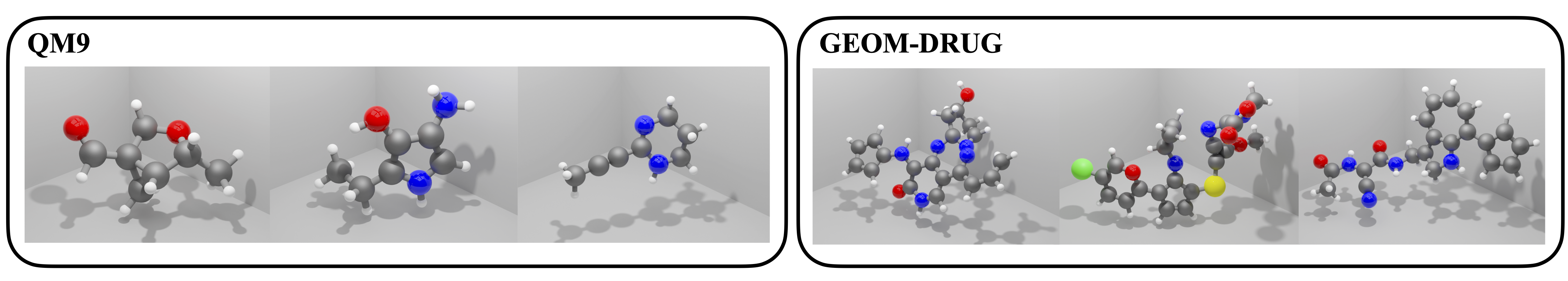}
    \vspace{-2\baselineskip}
    \caption{\small{Examples of molecules generated by our method on QM9 and GEOM-DRUG.}}
  
    \label{fig:mol_example}
\end{figure}

\begin{figure}[h]
    \centering
    \includegraphics[width=1.0\linewidth]{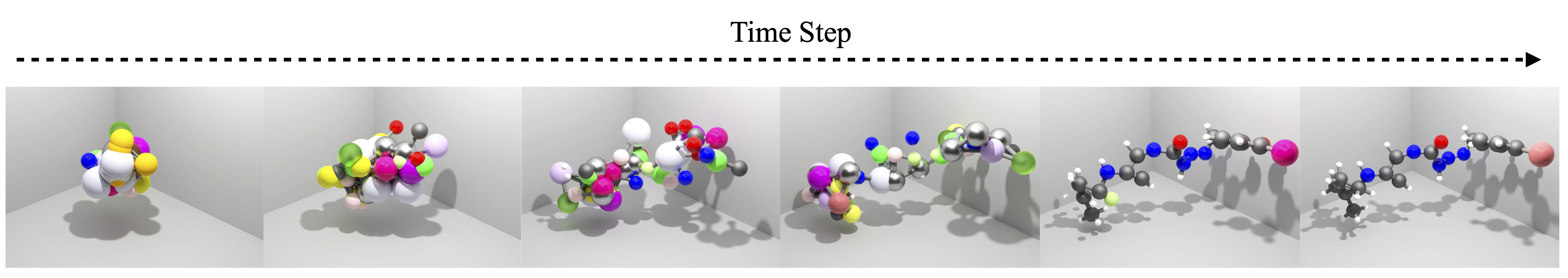}
  
    \caption{\small{An example of generation trajectory following $\P^\theta$ of our method, trained on GEOM-DRUG.}} 
  
    \label{fig:mol_traj}
\end{figure}

\begin{table}[h]
    \centering
    \caption{\small{We compare w. and w/o force results with different discretization time steps.}
    }
    \scalebox{0.72}{
    \begin{tabular}{l|cc|cc|cc}
    \hline 
    & \multicolumn{6}{c}{Time Step} \\
    \hline
    & \multicolumn{2}{c|}{50} & \multicolumn{2}{c|}{100} & \multicolumn{2}{c}{500} \\
    \hline
    & Atom Stable (\%) & Mol Stable (\%) & Atom Stable (\%) & Mol Stable (\%) & Atom Stable (\%) & Mol Stable (\%) \\
    \hline
    EGM & 97.0$\pm$0.1 & 66.4$\pm$0.2 & 97.3$\pm$0.1 & 69.8$\pm$0.2 & \bf{98.5$\pm$0.1} & \bf{81.2$\pm$0.1} \\
    Bridge + Force \eqref{eq:our_energy} & \bf{97.3$\pm$0.1} & \bf{69.2$\pm$0.2} & \bf{97.9$\pm$0.1} & \bf{72.3$\pm$0.2} & \bf{98.7$\pm$0.1} & \bf{83.7$\pm$0.1} \\
    \hline
    \end{tabular}}
    \label{tab:mol_timestep}
 
\end{table}

\textbf{Dataset Settings}
QM9 \cite{ramakrishnan2014quantum} molecular properties and atom coordinates for 130k small molecules with up to 9 heavy atoms with 5 different types of atoms. This data set contains small amino acids, such as GLY, ALA, as well as nucleobases cytosine, uracil, and thymine. We follow the common practice in \cite{hoogeboom2022equivariant} to split the train, validation, and test partitions, with 100K, 18K, and 13K samples.
GEOM-DRUG \cite{axelrod2022geom} is a dataset that contains drug-like molecules. It features 37 million molecular conformations annotated by energy and statistical weight for over 450,000 molecules. 
Each molecule contains 44 atoms on average, with 5 different types of atoms.
Following \cite{hoogeboom2022equivariant,satorras2021n}, we retain the 30 lowest energy conformations for each molecule.

\textbf{Training Configurations.}
On QM9, we train the EGNNs with 256 hidden features and 9 layers for 1100 epochs, a batch size 64, and a constant learning rate $10^{-4}$, which is the default training configuration. 
We use the polynomial noise schedule used in \cite{hoogeboom2022equivariant} which linearly decay from $10^{-2} / T$ to 0. We linearly decay $\alpha$ from $10^{-3} / T$ to 0 \emph{w.r.t.} time step. 
We set $k=5$ \eqref{eq:our_energy} by default.
On GEOM-DRUG, we train the EGNNs with 256 hidden features and 8 layers with batch size 64, a constant learning rate $10^{-4}$, and 10 epochs.
It takes approximately 10 days to train the model on these two datasets on one \texttt{Tesla V100-SXM2-32GB} GPU.
We provide E(3) Equivariant Diffusion Model (EDM) \cite{hoogeboom2022equivariant} and E(3) Equivariant Normalizing Flow (EN-Flow) \cite{satorras2021n} as our baselines. Both two are trained with the same configurations as ours.

\textbf{Results: Higher Quality and Novelty.}
We summarize our experimental results in Table \ref{tab:mol_main}. We observe that \textbf{(1)} our method generates molecules with better qualities than the others. On QM9, we notice that we improve the molecule stability score by a large margin (from $82.0$ to $84.6$) and slightly improve the atom stability score  (from $98.7$ to $98.8$). 
It indicates that with the informed prior bridge helps improves the 
quality of the generated molecules.  
\textbf{(2)} Our method achieves a better novelty score. Compared to E-GDM, we improve the novelty score from $65.7$ to $68.8$. This implies that our introduced energy does not hurt the novelty when the statistics are estimated over the training dataset. Notice that although the GDM and EN-Flow achieve a better novelty score, the sample quality is much worse. The reason is that, due to the metric definition, low-quality out-of-distribution samples lead to high novelty scores.
\textbf{(3)} On the GEOM-DRUG dataset, the atom stability is improved from $81.3$ to $82.4$, which shows that our method can work for macro-molecules.
\textbf{(4)} We visualize and qualitatively evaluate our generate molecules. Figure \ref{fig:mol_traj} displays the trajectory on GEOM-DRUG and Figure \ref{fig:mol_example} shows the samples on two datasets. 
\textbf{(5)} Bridge processes and E-GDM obtain comparable results on our tested benchmarks.
\RV{\textbf{(6)} 
The computational load added by introducing prior bridges is small. 
Compared to EGM, we only introduce 8\% additional cost in training and 3\% for inference.}

\textbf{Result: Better With Fewer Time Steps.}
We display the performance of our method with fewer time steps in Table \ref{tab:mol_timestep}. We observe that
\textbf{(1)} with fewer time steps, the baseline EGM method gets worse results than 1000 steps in Table \ref{tab:mol_main}. 
\textbf{(2)} with 500 steps, our method still keeps a consistently good performance. \textbf{(3)} with even fewer 50 or 100 steps, our method yields a worse result than 1000 steps in Table \ref{tab:mol_main}, but still outperforms the baseline method by a large margin.

\begin{table}[t]
    \centering
    \caption{\small{We compare EGM models trained with different force mentioned in Section \ref{sec:method}.}
    }
    \scalebox{.78}{
    \begin{tabular}{l|cc||l|cc}
    \hline 
    Method & Atom Stable (\%) & Mol Stable (\%) & Method & Atom Stable (\%) & Mol Stable (\%)  \\
    \hline
    Force \eqref{eq:our_energy}, $k=7$ & \bf{98.8$\pm$0.1} & \bf{84.5$\pm$0.2} & Force \eqref{eq:our_amber_energy} & 98.7$\pm$0.1 & 83.1$\pm$0.2 \\ 
    Force \eqref{eq:our_energy}, $k=5$ & \bf{98.8$\pm$0.1} & \bf{84.6$\pm$0.3} & Force \eqref{eq:our_amber_energy} w/o. bond & 98.7$\pm$0.1 &  82.5$\pm$0.1 \\
    Force \eqref{eq:our_energy}, $k=3$ & \bf{98.8$\pm$0.1} & 83.9$\pm$0.3 & Force \eqref{eq:our_amber_energy} w/o. angle & 98.7$\pm$0.1 & 82.4$\pm$0.2 \\
    Force \eqref{eq:our_energy}, $k=1$ & \bf{98.8$\pm$0.1} & 82.7$\pm$0.3 & Force \eqref{eq:our_amber_energy} w/o. Long-range & 98.7$\pm$0.1 & 82.7$\pm$0.2 \\
    \hline
    \end{tabular}}
    \label{tab:mol_force}
\end{table}

\textbf{Ablation: Impacts of Different Energies.}
We apply several energies we discuss in Section \ref{sec:method}, and compare them on the QM9 dataset.
\textbf{(1)} We notice that our energy \eqref{eq:our_energy} gets better performance with larger $k$ when $k \leq 5$. $k=7$ achieves comparable performance as $k=5$.
Larger $k$ also requires more computation time, which yields a trade-off between performance and efficiency.
\textbf{(2)} For \eqref{eq:our_amber_energy}, once removing a typical term, the performance drops.
\textbf{(3)} In all the cases, applying additional forces outperforms the bridge processes baseline w/o. force.

\subsection{Force Guided Point Cloud Generation}

We apply uniformity-promoting priors 
to point cloud generation. %
We apply our method based on the diffusion model for point cloud generation introduced by point cloud diffusion model ~\cite{luo2021diffusion} and compare it with the original diffusion model as well as the case of bridge processes w/o. force prior.
We observe that our method yields better results in various evaluation metrics under different setups.

\begin{table}[t!]
    \centering
    \caption{ \small{Point cloud generation results. CD is multiplied by $10^3$, EMD is multiplied by $10$.}
    }
    \scalebox{0.87}{
    \begin{tabular}{l|l|cccc|cccc}
    \hline
    & & \multicolumn{4}{c|}{10 Steps}  & \multicolumn{4}{c}{100 Steps}  \\
    \hline
    & & \multicolumn{2}{c}{MMD $\downarrow$} & \multicolumn{2}{c|}{COV $\uparrow$} & \multicolumn{2}{c}{MMD $\downarrow$} & \multicolumn{2}{c}{COV $\uparrow$}  \\
    \cline{3-10}
    &  & CD & EMD & CD & EMD & CD & EMD & CD & EMD \\
    \hline
    \multirow{4}{*}{Chair} & Diffusion \cite{luo2021diffusion} & 14.01 & 3.23  & 32.72 & 29.36 & 12.32 & 1.79 & 47.41 & \textbf{47.59} \\
    & Bridge & 13.04 & 2.14 & 46.01 & 42.59 & 12.47 & 1.85 & 47.83 & 47.13 \\
    \cline{2-10}
    & + Riesz  & 12.84 & 1.95 & 47.21 & 44.31 & 12.31 & 1.82 & 48.14 & 47.42 \\
    & + Statistic & \textbf{12.65} & \textbf{1.84} & \textbf{47.58} & \textbf{45.23} & \textbf{12.25} & \textbf{1.78} & \textbf{48.39} & 47.56 \\
    \hline
        \hline
    \multirow{4}{*}{Airplane} & Diffusion \cite{luo2021diffusion} & 3.71 & 1.31 & 43.12 & 39.94  & 3.28 & \textbf{1.04} & \textbf{48.74} & 46.38 \\
    & Bridge  & 3.44 & 1.24 & 46.90 & 43.46 & 3.37 & 1.08 & 47.11 & 46.17  \\
    \cline{2-10}
    & + Riesz  & 3.39 & 1.20 & \textbf{47.11} & 43.12 & \textbf{3.24} & 1.09 & 48.62 & 46.23 \\
    & + Statistic & \textbf{3.30} & \textbf{1.12} & \textbf{47.02} & \textbf{44.67} & \textbf{3.24} & 1.06 & 48.53 & 46.73 \\
    \hline
    \end{tabular}}
 
    \label{tab:pc_main}
\end{table}

\textbf{Dataset.} 
We use the ShapeNet~\cite{chang2015shapenet} dataset 
for point cloud generation. 
ShapeNet contains 55 categories. 
We select Airplane and Chair,
which are the two most common categories to evaluate in recent point cloud generation works~\cite{cai2020learning,luo2021diffusion,yang2019pointflow,zhou20213d}. We construct the point clouds following the setup in ~\cite{luo2021diffusion}, split the train, valid and test dataset in $80\%$, $15\%$ and $5\%$ and samples 2048 points uniformly on the mesh surface.

\textbf{Evaluation Metric.} We evaluate the generated shape quality in two aspects following the previous works, including the minimum matching distance (MMD) and coverage score (COV). These scores are the two most common practices in the previous works. We use Chamfer Distance (CD) and Earth Mover's Distance (EMD) as the distance metric to compute the MMD and COV.

\textbf{Experiment Setup.} We train the model with two different configurations. The first one uses exactly the same experiment setup configuration introduced in~\cite{luo2021diffusion}. Thus, we use the same model architecture and train the model in 100 diffuse steps with a learning rate $2 \times 10^{-3}$, batch size 128, and linear noise schedule from $0.02$ to $10^{-4}$. We initial $\alpha$ with 0.1 and jointly learn it with the network. For the second setup, to evaluate the better converge speed of our method, we decrease the diffuse step from 100 to 10 with other settings the same. For the diffusion model baseline, we reproduce the number by directly using the pre-trained model checkpoint and testing it on the test set provided by the official codebase.

\begin{figure}[t!]
    \centering
    \includegraphics[width=1.0\textwidth]{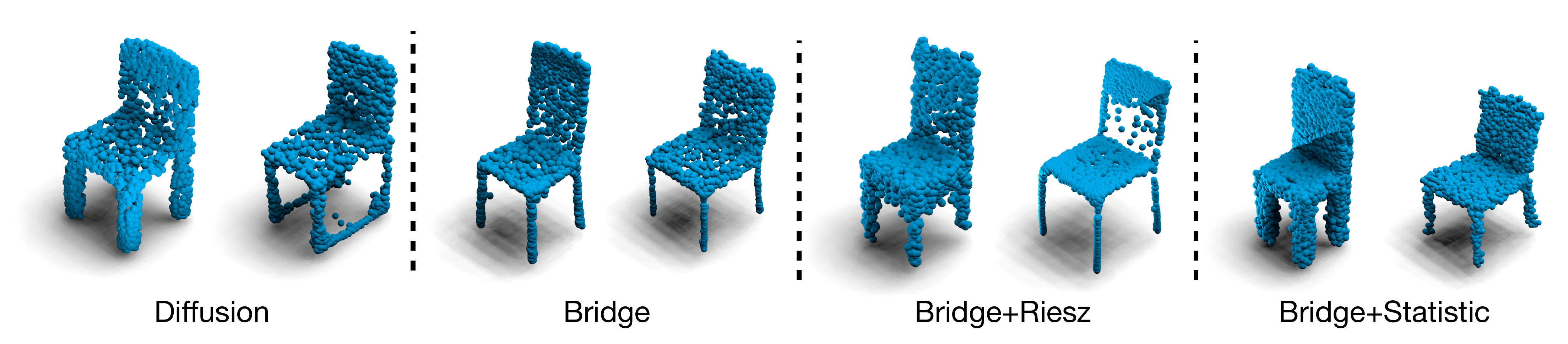}
   
    \caption{\small{From left to right are examples of point clouds generated by \cite{luo2021diffusion}, our method with uniformative bridge ($f_t=0$), 
     bridge with Riesz energy ($f_t=-\dd E_{\mathrm{Riesz}}$) and with KNN energy ($f_t = -\dd E_{knn}$). 
     We see that the Riesz and KNN energies 
     yield more uniformly distributed points. 
     Riesz energy sometimes creates additional outlier points due to its repulsive nature. 
     }}

    \label{fig:pc}
\end{figure}

\textbf{Result.} We show our experimental result in Table~\ref{tab:pc_main}. We see that
\textbf{(1)} In the 10 steps setup, all variants of our approach are clearly stronger than the diffusion model.  
With force added, our method with physical prior achieves nearly the same performance as the 100 steps setup.
\textbf{(2)} In the 100 steps setup, adding energy potential as prior improves the bridge process performance and further let it beat the diffusion model baseline.\textbf{(3)}
Since the test points are uniformly sampled on the surface, a better score indicates a closer point distribution to the reference set. Further, when compare with Riesz energy \eqref{eq:riesz_energy}, statistic  gap energy \eqref{eq:knn_energy} performs better. 
One explanation is the Riesz energy pushes the points to some outlier position in sample generate samples, while statistic gap energy is more robust. We also show visualization samples in Figure~\ref{fig:pc}.

\vspace{-2mm}
\section{Conclusion and Limitations}
\vspace{-2mm}
We propose a framework to inject informative priors into
learning neural parameterized diffusion models, with applications to both
molecules and 3D point cloud generation.
Empirically, we demonstrate that our method has the advantages such as better generation quality, less sampling time and easy-to-calculate potential energies.
For future works, we plan to 1) study the relation between different types of forces for different domain of molecules, 2) study how to generate valid proteins  in which the number of atoms is very large,
and 3) apply our method to more realistic applications such as antibody design or hydrolase engineering.

In both energy functions in \eqref{eq:our_amber_energy} and \eqref{eq:our_energy}, we do not add torsional angle related energy  \cite{jing2022torsional}  mainly because it is hard to verify whether four atoms are bonded together
during the stochastic process.
We plan to study how to include this for better performance in future works.

Another weakness of deep diffusion bridge processes are their computation time.  Similar to previous diffusion models \cite{luo2022equivariant}, it takes a long  time to train a model.
We attempted to speed the training up by using a large batch size (\emph{e.g.}, 512, 1024) but found a performance drop. An important future direction is to study methods to  distribute and accelerate the training. %

\paragraph{Acknowledgements}
Authors are supported in part by CAREER-1846421, SenSE-2037267, EAGER-2041327, and Office of Navy Research, and NSF AI Institute for Foundations of Machine Learning (IFML). We would like to thank the anonymous reviewers and the area chair for their thoughtful comments and efforts towards improving our manuscript.

\bibliography{bibliography}

\begin{thebibliography}{10}

\bibitem{achlioptas2018learning}
Panos Achlioptas, Olga Diamanti, Ioannis Mitliagkas, and Leonidas Guibas.
\newblock Learning representations and generative models for 3d point clouds.
\newblock In {\em International conference on machine learning}, pages 40--49.
  PMLR, 2018.

\bibitem{alcalde2006environmental}
Miguel Alcalde, Manuel Ferrer, Francisco~J Plou, and Antonio Ballesteros.
\newblock Environmental biocatalysis: from remediation with enzymes to novel
  green processes.
\newblock {\em TRENDS in Biotechnology}, 24(6):281--287, 2006.

\bibitem{anand2022protein}
Namrata Anand, Raphael Eguchi, Irimpan~I Mathews, Carla~P Perez, Alexander
  Derry, Russ~B Altman, and Po-Ssu Huang.
\newblock Protein sequence design with a learned potential.
\newblock {\em Nature communications}, 13(1):1--11, 2022.

\bibitem{axelrod2022geom}
Simon Axelrod and Rafael G{\'o}mez-Bombarelli.
\newblock Geom, energy-annotated molecular conformations for property
  prediction and molecular generation.
\newblock {\em Scientific Data}, 9(1):185, 2022.

\bibitem{cai2020learning}
Ruojin Cai, Guandao Yang, Hadar Averbuch-Elor, Zekun Hao, Serge Belongie, Noah
  Snavely, and Bharath Hariharan.
\newblock Learning gradient fields for shape generation.
\newblock In {\em European Conference on Computer Vision}, pages 364--381.
  Springer, 2020.

\bibitem{chang2015shapenet}
Angel~X Chang, Thomas Funkhouser, Leonidas Guibas, Pat Hanrahan, Qixing Huang,
  Zimo Li, Silvio Savarese, Manolis Savva, Shuran Song, Hao Su, et~al.
\newblock Shapenet: An information-rich 3d model repository.
\newblock {\em arXiv preprint arXiv:1512.03012}, 2015.

\bibitem{chen2021likelihood}
Tianrong Chen, Guan-Horng Liu, and Evangelos~A Theodorou.
\newblock Likelihood training of schr$\backslash$" odinger bridge using
  forward-backward sdes theory.
\newblock {\em arXiv preprint arXiv:2110.11291}, 2021.

\bibitem{cornell1995second}
Wendy~D Cornell, Piotr Cieplak, Christopher~I Bayly, Ian~R Gould, Kenneth~M
  Merz, David~M Ferguson, David~C Spellmeyer, Thomas Fox, James~W Caldwell, and
  Peter~A Kollman.
\newblock A second generation force field for the simulation of proteins,
  nucleic acids, and organic molecules.
\newblock {\em Journal of the American Chemical Society}, 117(19):5179--5197,
  1995.

\bibitem{de2021diffusion}
Valentin De~Bortoli, James Thornton, Jeremy Heng, and Arnaud Doucet.
\newblock Diffusion schr{\"o}dinger bridge with applications to score-based
  generative modeling.
\newblock {\em Advances in Neural Information Processing Systems}, 34, 2021.

\bibitem{doob1984classical}
Joseph~L Doob and JI~Doob.
\newblock {\em Classical potential theory and its probabilistic counterpart},
  volume 549.
\newblock Springer, 1984.

\bibitem{du2022molgensurvey}
Yuanqi Du, Tianfan Fu, Jimeng Sun, and Shengchao Liu.
\newblock Molgensurvey: A systematic survey in machine learning models for
  molecule design.
\newblock {\em arXiv preprint arXiv:2203.14500}, 2022.

\bibitem{duan2003point}
Yong Duan, Chun Wu, Shibasish Chowdhury, Mathew~C Lee, Guoming Xiong, Wei
  Zhang, Rong Yang, Piotr Cieplak, Ray Luo, Taisung Lee, et~al.
\newblock A point-charge force field for molecular mechanics simulations of
  proteins based on condensed-phase quantum mechanical calculations.
\newblock {\em Journal of computational chemistry}, 24(16):1999--2012, 2003.

\bibitem{NIPS2019_8974}
Niklas Gebauer, Michael Gastegger, and Kristof Sch\"{u}tt.
\newblock Symmetry-adapted generation of 3d point sets for the targeted
  discovery of molecules.
\newblock In H.~Wallach, H.~Larochelle, A.~Beygelzimer, F.~d\textquotesingle
  Alch\'{e}-Buc, E.~Fox, and R.~Garnett, editors, {\em Advances in Neural
  Information Processing Systems 32}, pages 7566--7578. Curran Associates,
  Inc., 2019.

\bibitem{gnaneshwar2022score}
Dwaraknath Gnaneshwar, Bharath Ramsundar, Dhairya Gandhi, Rachel Kurchin, and
  Venkatasubramanian Viswanathan.
\newblock Score-based generative models for molecule generation.
\newblock {\em arXiv preprint arXiv:2203.04698}, 2022.

\bibitem{gong2022fill}
Chengyue Gong, Lemeng Wu, and Qiang Liu.
\newblock How to fill the optimum set? population gradient descent with
  harmless diversity.
\newblock {\em arXiv preprint arXiv:2202.08376}, 2022.

\bibitem{gotz2003riesz}
Mario G{\"o}tz.
\newblock On the riesz energy of measures.
\newblock {\em Journal of Approximation Theory}, 122(1):62--78, 2003.

\bibitem{ho2020denoising}
Jonathan Ho, Ajay Jain, and Pieter Abbeel.
\newblock Denoising diffusion probabilistic models.
\newblock {\em Advances in Neural Information Processing Systems},
  33:6840--6851, 2020.

\bibitem{hoogeboom2022equivariant}
Emiel Hoogeboom, Victor~Garcia Satorras, Cl{\'e}ment Vignac, and Max Welling.
\newblock Equivariant diffusion for molecule generation in 3d.
\newblock {\em arXiv preprint arXiv:2203.17003}, 2022.

\bibitem{jing2022torsional}
Bowen Jing, Gabriele Corso, Regina Barzilay, and Tommi~S Jaakkola.
\newblock Torsional diffusion for molecular conformer generation.
\newblock In {\em ICLR2022 Machine Learning for Drug Discovery}, 2022.

\bibitem{jorgensen1996development}
William~L Jorgensen, David~S Maxwell, and Julian Tirado-Rives.
\newblock Development and testing of the opls all-atom force field on
  conformational energetics and properties of organic liquids.
\newblock {\em Journal of the American Chemical Society}, 118(45):11225--11236,
  1996.

\bibitem{jumper2021highly}
John Jumper, Richard Evans, Alexander Pritzel, Tim Green, Michael Figurnov,
  Olaf Ronneberger, Kathryn Tunyasuvunakool, Russ Bates, Augustin
  {$\v{Z}$}{$\'\i$}dek, Anna Potapenko, et~al.
\newblock Highly accurate protein structure prediction with alphafold.
\newblock {\em Nature}, 596(7873):583--589, 2021.

\bibitem{landrum2013rdkit}
Greg Landrum.
\newblock Rdkit documentation.
\newblock {\em Release}, 1(1-79):4, 2013.

\bibitem{liu2021pre}
Shengchao Liu, Hanchen Wang, Weiyang Liu, Joan Lasenby, Hongyu Guo, and Jian
  Tang.
\newblock Pre-training molecular graph representation with 3d geometry.
\newblock {\em arXiv preprint arXiv:2110.07728}, 2021.

\bibitem{liu2022bridge}
Xingchao Liu, Lemeng Wu, Mao Ye, and Qiang Liu.
\newblock Let us build bridges: Understanding and extending diffusion
  generative models, 2022.

\bibitem{lu2022machine}
Hongyuan Lu, Daniel~J Diaz, Natalie~J Czarnecki, Congzhi Zhu, Wantae Kim,
  Raghav Shroff, Daniel~J Acosta, Bradley~R Alexander, Hannah~O Cole, Yan
  Zhang, et~al.
\newblock Machine learning-aided engineering of hydrolases for pet
  depolymerization.
\newblock {\em Nature}, 604(7907):662--667, 2022.

\bibitem{luo20223d}
Shitong Luo, Jiaqi Guan, Jianzhu Ma, and Jian Peng.
\newblock A 3d molecule generative model for structure-based drug design.
\newblock {\em arXiv preprint arXiv:2203.10446}, 2022.

\bibitem{luo2021diffusion}
Shitong Luo and Wei Hu.
\newblock Diffusion probabilistic models for 3d point cloud generation.
\newblock In {\em Proceedings of the IEEE/CVF Conference on Computer Vision and
  Pattern Recognition (CVPR)}, June 2021.

\bibitem{luo2022equivariant}
Shitong Luo, Jiahan Li, Jiaqi Guan, Yufeng Su, Chaoran Cheng, Jian Peng, and
  Jianzhu Ma.
\newblock Equivariant point cloud analysis via learning orientations for
  message passing.
\newblock {\em arXiv preprint arXiv:2203.14486}, 2022.

\bibitem{luo2021predicting}
Shitong Luo, Chence Shi, Minkai Xu, and Jian Tang.
\newblock Predicting molecular conformation via dynamic graph score matching.
\newblock {\em Advances in Neural Information Processing Systems}, 34, 2021.

\bibitem{manaa2002decomposition}
M~Riad Manaa, Laurence~E Fried, Carl~F Melius, Marcus Elstner, and
  Th~Frauenheim.
\newblock Decomposition of hmx at extreme conditions: A molecular dynamics
  simulation.
\newblock {\em The Journal of Physical Chemistry A}, 106(39):9024--9029, 2002.

\bibitem{mansimov2019molecular}
Elman Mansimov, Omar Mahmood, Seokho Kang, and Kyunghyun Cho.
\newblock Molecular geometry prediction using a deep generative graph neural
  network.
\newblock {\em Scientific reports}, 9(1):1--13, 2019.

\bibitem{mao2007stochastic}
Xuerong Mao.
\newblock {\em Stochastic differential equations and applications}.
\newblock Elsevier, 2007.

\bibitem{Oksendal2013}
B.~Oksendal.
\newblock {\em Stochastic differential equations: an introduction with
  applications}.
\newblock Springer Science \& Business Media, 6 edition, 2013.

\bibitem{peluchetti2022nondenoising}
Stefano Peluchetti.
\newblock Non-denoising forward-time diffusions, 2022.

\bibitem{ramakrishnan2014quantum}
Raghunathan Ramakrishnan, Pavlo~O Dral, Matthias Rupp, and O~Anatole
  Von~Lilienfeld.
\newblock Quantum chemistry structures and properties of 134 kilo molecules.
\newblock {\em Scientific data}, 1(1):1--7, 2014.

\bibitem{satorras2021n}
Victor~Garcia Satorras, Emiel Hoogeboom, Fabian~B Fuchs, Ingmar Posner, and Max
  Welling.
\newblock E (n) equivariant normalizing flows.
\newblock {\em arXiv preprint arXiv:2105.09016}, 2021.

\bibitem{shi2021confgf}
Chence Shi, Shitong Luo, Minkai Xu, and Jian Tang.
\newblock Learning gradient fields for molecular conformation generation.
\newblock In {\em International Conference on Machine Learning}, 2021.

\bibitem{simm2019generative}
Gregor~NC Simm and Jos{\'e}~Miguel Hern{\'a}ndez-Lobato.
\newblock A generative model for molecular distance geometry.
\newblock {\em arXiv preprint arXiv:1909.11459}, 2019.

\bibitem{sohl2015deep}
Jascha Sohl-Dickstein, Eric Weiss, Niru Maheswaranathan, and Surya Ganguli.
\newblock Deep unsupervised learning using nonequilibrium thermodynamics.
\newblock In {\em International Conference on Machine Learning}, pages
  2256--2265. PMLR, 2015.

\bibitem{song2020denoising}
Jiaming Song, Chenlin Meng, and Stefano Ermon.
\newblock Denoising diffusion implicit models.
\newblock {\em arXiv preprint arXiv:2010.02502}, 2020.

\bibitem{song2021maximum}
Yang Song, Conor Durkan, Iain Murray, and Stefano Ermon.
\newblock Maximum likelihood training of score-based diffusion models.
\newblock {\em Advances in Neural Information Processing Systems}, 34, 2021.

\bibitem{song2020score}
Yang Song, Jascha Sohl-Dickstein, Diederik~P Kingma, Abhishek Kumar, Stefano
  Ermon, and Ben Poole.
\newblock Score-based generative modeling through stochastic differential
  equations.
\newblock {\em arXiv preprint arXiv:2011.13456}, 2020.

\bibitem{vargas2021solving}
Francisco Vargas, Pierre Thodoroff, Austen Lamacraft, and Neil Lawrence.
\newblock Solving schr{\"o}dinger bridges via maximum likelihood.
\newblock {\em Entropy}, 23(9):1134, 2021.

\bibitem{wang2021deep}
Gefei Wang, Yuling Jiao, Qian Xu, Yang Wang, and Can Yang.
\newblock Deep generative learning via schr{\"o}dinger bridge.
\newblock In {\em International Conference on Machine Learning}, pages
  10794--10804. PMLR, 2021.

\bibitem{wang2017accurate}
Sheng Wang, Siqi Sun, Zhen Li, Renyu Zhang, and Jinbo Xu.
\newblock Accurate de novo prediction of protein contact map by ultra-deep
  learning model.
\newblock {\em PLoS computational biology}, 13(1):e1005324, 2017.

\bibitem{wu2022score}
Fang Wu, Qiang Zhang, Xurui Jin, Yinghui Jiang, and Stan~Z Li.
\newblock A score-based geometric model for molecular dynamics simulations.
\newblock {\em arXiv preprint arXiv:2204.08672}, 2022.

\bibitem{xu2021learning}
Minkai Xu, Shitong Luo, Yoshua Bengio, Jian Peng, and Jian Tang.
\newblock Learning neural generative dynamics for molecular conformation
  generation.
\newblock {\em arXiv preprint arXiv:2102.10240}, 2021.

\bibitem{xu2021end}
Minkai Xu, Wujie Wang, Shitong Luo, Chence Shi, Yoshua Bengio, Rafael
  Gomez-Bombarelli, and Jian Tang.
\newblock An end-to-end framework for molecular conformation generation via
  bilevel programming.
\newblock In {\em International Conference on Machine Learning}, 2021.

\bibitem{xu2022geodiff}
Minkai Xu, Lantao Yu, Yang Song, Chence Shi, Stefano Ermon, and Jian Tang.
\newblock Geodiff: A geometric diffusion model for molecular conformation
  generation.
\newblock In {\em International Conference on Learning Representations}, 2022.

\bibitem{yang2019pointflow}
Guandao Yang, Xun Huang, Zekun Hao, Ming-Yu Liu, Serge Belongie, and Bharath
  Hariharan.
\newblock Pointflow: 3d point cloud generation with continuous normalizing
  flows.
\newblock In {\em Proceedings of the IEEE/CVF International Conference on
  Computer Vision}, pages 4541--4550, 2019.

\bibitem{zhou20213d}
Linqi Zhou, Yilun Du, and Jiajun Wu.
\newblock 3d shape generation and completion through point-voxel diffusion.
\newblock In {\em Proceedings of the IEEE/CVF International Conference on
  Computer Vision}, pages 5826--5835, 2021.

\end{thebibliography}
\bibliographystyle{plain}

\clearpage

\appendix

\section{Proofs} 
\label{sec:appendix_more_dis}

\printProofs 

 \begin{thm} \label{thm:lyap2} 
  Assume $$\d Z_t = \eta(Z_t, t) \dt + \sigma(\X_t, t) \dW_t,~~~~~~~t\in[0,\T].$$ 
We have  $Z_\T \in A$ with probability one 
if there exists a  function  $U\colon \RR^d \times [0,1] \to \RR$  such that 

1) $U(\cdot, t) \in C^2(\RR^d)$ and $U(z, \cdot) \in C^1([0,\T]);$ 
 
 2) $U(z, \T) \geq 0$,  $z\in \RR^d$, and $U(z, \T) = 0$ implies that  $z \in A$, where $A$ is a measurable set in $\RR^d$; 
 
 3) There exists a sequence $\{\alpha_t$, $\beta_{t}, \gamma_t \colon t\in[0, \T] \}$, such that for %
 $t\in[0,\T]$, 
 \bb  
 \E[\dd_z U(Z_t, t) \tt \eta(Z_t, t)] & \leq - \alpha_t \E[U(Z_t, t)] + \beta_{t},  \\ 
 \E[\partial_t U(Z_t, t) + \frac{1}{2} \trace(\dd_z^2 U(Z_t, t) \sigma^2(Z_t, t))] & \leq \gamma_t; 
\ee 

4) Define $\zeta_t = \exp(\int_0^t \alpha_s \d s)$. We assume 
\bbb \label{equ:zeta}
\lim_{t\uparrow T} \zeta_t = +\infty, ~~~~ 
\lim_{t\uparrow T} \frac{\zeta_t}{\int_0^t \zeta_s (\beta_s + \gamma_s) \d s} = +\infty.      
\eee 

\end{thm}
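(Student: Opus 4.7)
The core idea is to convert the hypotheses into a deterministic Gr\"onwall inequality for the function $m(t) \defeq \E[U(Z_t, t)]$, then exploit the integrating factor $\zeta_t$ to force $m(t) \to 0$ as $t\uparrow \T$. First, I would apply It\^o's formula to $U(Z_t,t)$ under the assumed regularity of $U$, giving
\[
\d U(Z_t,t) = \bigl(\partial_t U + \dd_z U \tt \eta + \tfrac{1}{2}\trace(\dd_z^2 U\, \sigma^2)\bigr)\dt + \dd_z U \tt \sigma \dW_t.
\]
Taking expectations (after a standard localization argument in which one uses a sequence of stopping times that exhaust $[0,T]$ and relies on the stochastic integral being a true martingale in the limit), and plugging in the two inequalities in hypothesis (3), we obtain the differential inequality
\[
m'(t) \leq -\alpha_t m(t) + \beta_t + \gamma_t.
\]

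Next, I would multiply through by the integrating factor $\zeta_t = \exp(\int_0^t \alpha_s\,\d s)$, recognize the left-hand side as $(\zeta_t m(t))'$, and integrate from $0$ to $t$ to obtain
\[
\zeta_t\, m(t) \leq m(0) + \int_0^t \zeta_s(\beta_s + \gamma_s)\,\d s.
\]
Dividing by $\zeta_t$ and sending $t\uparrow \T$, hypothesis (4) forces both terms on the right side to vanish in the limit, yielding $\limsup_{t\uparrow \T} m(t) \leq 0$.

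Finally, I would combine this with the terminal nonnegativity $U(z,\T)\geq 0$ from hypothesis (2). Since $Z_t$ has continuous trajectories and $U$ is continuous in both arguments, $U(Z_t,t) \to U(Z_\T,\T)$ almost surely as $t\uparrow \T$. Fatou's lemma applied to the nonnegative limit random variable (together with continuity of $m$ at the endpoint, which follows from dominated convergence under standard integrability of $U(Z_t,t)$) gives $\E[U(Z_\T,\T)] \leq \liminf_{t\uparrow \T} m(t) \leq 0$. Combined with $U(Z_\T,\T)\geq 0$ almost surely, this forces $U(Z_\T,\T) = 0$ almost surely, and by the characterization in (2), $Z_\T \in A$ with probability one.

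The main obstacle I anticipate is the two limit-exchange steps: (i) justifying that the expectation of the stochastic integral in It\^o's formula vanishes, which requires either an a priori $L^2$ bound on $\dd_z U \tt \sigma$ along $Z_t$ or a localization/truncation argument followed by monotone convergence on $m$; and (ii) passing from $\limsup_{t\uparrow \T} m(t) \leq 0$ back to a statement about the law of $Z_\T$ itself, since the theorem's hypotheses do not \emph{a priori} give uniform integrability of $U(Z_t,t)$ near $t=\T$. The cleanest remedy is to assume (or verify in the applications of interest) enough polynomial growth of $U$ and its derivatives, together with moment bounds on $Z_t$, so that both exchanges are valid; the rest of the argument is then a mechanical Gr\"onwall-plus-Fatou computation.
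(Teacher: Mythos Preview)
Your proposal is correct and follows essentially the same route as the paper: apply It\^o's formula to $U(Z_t,t)$, take expectations to obtain the scalar inequality $m'(t)\leq -\alpha_t m(t)+\beta_t+\gamma_t$, use the integrating factor $\zeta_t$ (the paper packages this as a Gr\"onwall lemma), and conclude from $\limsup_{t\uparrow\T} m(t)\leq 0$ together with $U(\cdot,\T)\geq 0$ that $U(Z_\T,\T)=0$ almost surely. If anything you are more careful than the paper, which simply writes ``taking expectation on both sides'' and ``$\E[U(Z_\T,\T)]=\lim_{t\uparrow\T} u_t$'' without addressing the localization or limit-exchange issues you flag.
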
  
 
  \begin{proof}  
 Following $\d Z_t = \eta(Z_t, t) \dt + \sigma(\X_t, t) \dW_t$, we have by Ito's Lemma, 
 \bb 
 \d U(Z_t, t) 
 & =  \dd U(Z_t, t) \tt (\eta(Z_t, t) \dt  +  \sigma(\X_t, t) \dW_t) + 
  \partial_t U(Z_t, t) \dt  + 
  \frac{1}{2} \trace(\dd^2 U(Z_t, t)\sigma^2(\X_t, t)) \dt,
 \ee 
 for $t\in[0,T]$. 
 Taking expectation on both sides, 
 $$
 \frac{\d}{\d t} \E(U(Z_t)) = 
   \E[\dd_z U(Z_t, t) \tt \eta(Z_t, t)] + 
  \E\left [\partial_t U(Z_t, t) +  \frac{1}{2} \trace(\dd^2 U(Z_t, t)\sigma^2(\X_t, t)) \right].
 $$
 Let $u_t = \E[U(Z_t, t)]$.
 By the assumption above, we get 
 $$
 \dot u_t \leq -\alpha_t u_t + \beta_t + \gamma_t. 
 $$
Following Grönwall's inequality (see Lemma~\ref{lem:gronwall} below), we have $\E[U(Z_\T, \T)] = u_\T = \lim_{t\uparrow \T} u_t \leq 0$ if \eqref{equ:zeta} holds. Because $U(z, \T) \geq 0$, this suggests that $U(Z_\T, \T) = 0$ and hence $Z_\T \in A$ almost surely. 
\end{proof} 

\begin{lem}\label{lem:gronwall} 
Let $u_t\in \RR$ and $\alpha_t, \beta_t\geq 0$, and 
 $\frac{\d}{\dt } u_t \leq - \alpha_t u_t + \beta_t$, $t \in [0,T]$ for $T>0$. 
We have 
\bb 
u_t \leq \frac{1}{\zeta_t} (\zeta_0 u_0 +   \int_0^t \zeta_s \beta_s \d s), 
&&\text{where}&&   \zeta_t = \exp(\int_0^t \alpha_s \d s).
\ee 
Therefore, we have $\lim_{t\uparrow T} u_t \leq 0$ if 
$$
\lim_{t\uparrow T} \zeta_t = +\infty, ~~~~ 
\lim_{t\uparrow T} \frac{\zeta_t}{\int_0^t \zeta_s \beta_s \d s} = +\infty.      
$$
\end{lem}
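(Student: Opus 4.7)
The plan is to use the standard integrating factor trick for linear differential inequalities. Observe that $\zeta_t = \exp(\int_0^t \alpha_s \d s)$ satisfies $\dot\zeta_t = \alpha_t \zeta_t$ with $\zeta_0 = 1$ (here I am reading the definition literally; if one prefers $\zeta_0 = \exp(0)$, the same constant appears on both sides). Multiplying the hypothesis $\dot u_t \leq -\alpha_t u_t + \beta_t$ by $\zeta_t > 0$ and rearranging gives $\zeta_t \dot u_t + \alpha_t \zeta_t u_t \leq \zeta_t \beta_t$, whose left-hand side is precisely $\frac{\d}{\d t}(\zeta_t u_t)$. This is the key algebraic identity and the only nontrivial observation in the argument.

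Next, I would integrate this differential inequality from $0$ to $t$, which yields $\zeta_t u_t - \zeta_0 u_0 \leq \int_0^t \zeta_s \beta_s \d s$. Dividing through by $\zeta_t > 0$ produces exactly the claimed bound
\begin{equation*}
u_t \leq \frac{1}{\zeta_t}\left(\zeta_0 u_0 + \int_0^t \zeta_s \beta_s \d s\right).
\end{equation*}
This requires only absolute continuity of $u_t$ so that the fundamental theorem of calculus applies to the product $\zeta_t u_t$; under the smoothness assumptions implicit in the hypothesis $\dot u_t \leq \cdots$, this is automatic.

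Finally, for the limiting statement, I would split the right-hand side as $\frac{\zeta_0 u_0}{\zeta_t} + \frac{1}{\zeta_t}\int_0^t \zeta_s \beta_s \d s$. Under the two hypotheses $\zeta_t \to +\infty$ and $\zeta_t / \int_0^t \zeta_s \beta_s \d s \to +\infty$, the first term vanishes and the second tends to $0$ from above (it is nonnegative since $\beta_s \geq 0$ and $\zeta_s > 0$, but its reciprocal blows up), so $\limsup_{t\uparrow T} u_t \leq 0$, which is exactly the conclusion.

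The argument contains no real obstacle; the only thing to be careful about is the direction of the inequality when multiplying by $\zeta_t$ (fine since $\zeta_t > 0$) and recognizing the left side as an exact derivative. The subsequent application in the proof of Theorem~\ref{thm:lyap2} substitutes $\beta_t \rightsquigarrow \beta_t + \gamma_t$ and $u_t = \E[U(Z_t,t)]$, so no modification of the lemma itself is needed.
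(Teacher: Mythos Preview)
Your proposal is correct and is essentially the same as the paper's proof: the paper defines $v_t = \zeta_t u_t$, computes $\dot v_t \leq \zeta_t \beta_t$ using $\dot\zeta_t = \alpha_t \zeta_t$, integrates, and divides by $\zeta_t$, which is exactly your integrating-factor argument. The limiting statement is handled the same way in both.
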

\begin{proof} 
 Let $v_t = \zeta_t u_t$, where $\zeta_t =\exp(\int_0^t \alpha_s \d s)$ so $\dot \zeta_t = \zeta_t \alpha_t$. Then 
 $$\frac{\d}{\dt} v_t =\dot  \zeta_t  u_t + \zeta_t \dot u_t \leq (\dot \zeta_t - \zeta_t \alpha_t ) u_t + \zeta_t \beta_t = \zeta_t \beta_t. 
 $$
 So 
 $$
 v_t \leq v_0 + \beta \int_0^t \gamma_s \d s, 
 $$
 and hence 
 $$
 u_t \leq  \frac{1}{\zeta_t} (\zeta_0 u_0 +   \int_0^t \zeta_s \beta_s \d s). 
 $$
 To make $\lim_{t\uparrow T} u_t \leq 0$, we want  
$$
\lim_{t\uparrow T} \zeta_t = +\infty, ~~~~ 
\lim_{t\uparrow T} \frac{\zeta_t}{\int_0^t \zeta_s \beta_s \d s} = +\infty.      
$$
 \end{proof}

\begin{cor}
Let $\d \Z_t = \frac{x-\Z_t}{\T-t} + \varsigma_t\d W_t$ with law $\Q$. This uses the drift term of Brownian bridge, but have a time-varying diffusion coefficient $\varsigma_t\geq0$. 
Assume $\sup_{t\in[0,T]}\varsigma_t <\infty$. Then $\Q(Z_\T = z) = 1$. 
\end{cor}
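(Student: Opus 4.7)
The plan is to apply Theorem~\ref{thm:lyap2} directly, with the singleton target set $A = \{x\}$ and the quadratic Lyapunov function
\[
U(z,t) \defeq \tfrac{1}{2}\|x - z\|^2,
\]
which satisfies $U(\cdot,T)\in C^2$, $U(z,\cdot)\in C^1$, and $U(z,T)=0$ iff $z=x$, so the regularity and target-set conditions (1) and (2) of the theorem hold trivially.

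Next I would check conditions (3) and (4) by direct computation. Since $\nabla_z U(z,t) = -(x-z)$, the inner product with the drift gives $\nabla_z U(z,t)^\top \eta(z,t) = -\|x-z\|^2/(T-t) = -\tfrac{2}{T-t}\,U(z,t)$, so condition (3) is met with
\[
\alpha_t = \frac{2}{T-t}, \qquad \beta_t = 0.
\]
For the second inequality in (3), $\partial_t U \equiv 0$ and $\nabla_z^2 U = I$, so (interpreting $\sigma(z,t) = \varsigma_t I$) we get $\tfrac{1}{2}\operatorname{tr}(\nabla_z^2 U\,\sigma^2) = \tfrac{d}{2}\varsigma_t^2$, which is bounded above by $\gamma_t \defeq \tfrac{d}{2}\|\varsigma\|_\infty^2$ by the hypothesis $\sup_{t\in[0,T]}\varsigma_t<\infty$.

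The remaining task is to verify the two limits in (4). Integrating $\alpha_t$ yields $\zeta_t = \exp\!\left(\int_0^t \tfrac{2}{T-s}\,ds\right) = \bigl(T/(T-t)\bigr)^{2}$, which blows up as $t\uparrow T$. Furthermore,
\[
\int_0^t \zeta_s(\beta_s + \gamma_s)\,ds \;=\; \tfrac{d}{2}\|\varsigma\|_\infty^2\, T^2 \int_0^t \frac{ds}{(T-s)^2} \;=\; \tfrac{d}{2}\|\varsigma\|_\infty^2\, T^2\!\left(\tfrac{1}{T-t} - \tfrac{1}{T}\right),
\]
which grows only like $1/(T-t)$, while $\zeta_t$ grows like $1/(T-t)^2$. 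Hence $\zeta_t/\!\int_0^t \zeta_s(\beta_s+\gamma_s)\,ds\to+\infty$, verifying \eqref{equ:zeta}. Theorem~\ref{thm:lyap2} then yields $U(Z_T,T)=0$ a.s., i.e.\ $Z_T=x$ a.s.

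The only potentially delicate point is ensuring the SDE is well-defined up to the singular terminal time (the drift blows up as $t\uparrow T$), but this is the classical Brownian-bridge-type singularity and is handled by the very Lyapunov estimate above, which already gives $\E\|x-Z_t\|^2 = O(T-t)$ and thus pathwise existence up to $T$. No further work beyond plugging $U$ into Theorem~\ref{thm:lyap2} is needed.
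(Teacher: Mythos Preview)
Your proof is correct and follows the same route as the paper: apply Theorem~\ref{thm:lyap2} with the quadratic Lyapunov function $U(z,t)=\tfrac12\|x-z\|^2$ and verify conditions (3)--(4) by direct computation. The only cosmetic difference is that the paper takes the (weaker but still sufficient) choice $\alpha_t = 1/(T-t)$, yielding $\zeta_t = T/(T-t)$ and a logarithmic growth for $\int_0^t \zeta_s\gamma_s\,ds$, whereas you use the sharp constant $\alpha_t = 2/(T-t)$; both verifications go through identically.
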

\begin{proof}
We verify the conditions in Theorem~\ref{thm:lyap2}. 
Define $U(z,t) = \norm{x-z}^2/2$, and $\eta(z,t) = \frac{x-\Z_t}{\T-t}$. We have 
$\eta(z,t) \tt \dd U(z,t) = - U(z,t)/(T-t)$. So $\alpha_t = 1/(T-t)$. 

Also, $\partial_t U(z,t)+\frac{1}{2} \trace(\varsigma_t^2 \dd_z ^2 U(z,t)) = \frac{1}{2} \diag(\varsigma_t^2 I_{d\times d}) = \frac{d}{2} \varsigma_t^2 \defeq \beta_t \leq C <\infty$.

Then $\zeta_t  = \exp(\int_0^t \alpha_s \d s) = \frac{\T}{\T-t}  \to +\infty$ as $t\uparrow T$. 

Also, $\int_0^t \zeta_s \beta_s \d s \leq C \int_0^t \zeta_s \d s = C T(\log(T) - \log (T-t))$. 
So 
$$
\lim_{t\uparrow T} \frac{\zeta_t}{\int_0^t \zeta_s \beta_s \d s} 
\geq \lim_{t\uparrow T} \frac{\frac{\T}{\T-t}}{ C T(\log(T) - \log (T-t))} = +\infty . 
$$
\end{proof}

Using Girsanov theorem, we 
 show that introducing arbitrary non-singular changes (as defined below) on the drift and initialization of a process does not change its bridge conditions.  
\renewcommand{\breve}{\tilde}
  \begin{pro} \label{thm:perturb}
Consider the following processes 
\bb  
& \Q \colon ~~~~  Z_t = b_t(Z_t) \dt + \sigma_t (Z_t) \d W_t, ~~~ Z_0 \sim \mu_0\\ 
&\tilde \Q \colon ~~~~  Z_t = (b_t(Z_t) + \sigma_t(Z_t) f_t(Z_t))  \dt + \sigma_t (Z_t) \d W_t, ~~~ Z_0 \sim \tilde \mu_0. 
\ee 
Assume we have $\KL(\mu_0~||~\breve \mu_0) <+\infty$ and  $\E_{\Q}[\int_0^T \norm{f_t(Z_{t})}^2] < \infty$. Then for any event $A$, we have $ \Q(Z\in A) = 1$ if and only if $\breve \Q(Z\in A) = 1$. %
 \end{pro} 
  \begin{proof} 
 Using Girsnaov theorem \cite{Oksendal2013}, we have  
 $$
 \KL(\Q~||~\tilde \Q) =  
 \KL(\mu_0~||~ \tilde \mu_0) + \frac{1}{2} 
 \E_{\Q}\left [\int_0^\t \norm{f_t(Z_t)}_2^2\dt \right].  
 $$
 Hence, we have $\KL(\Q~||~\tilde  \Q) < +\infty$.  This implies that $\Q$ and $\tilde \Q$ has the same support. Hence $\Q( Z \in A) = 1$ iff $\tilde \Q(Z\in A) = 1$ for any measurable set $A$. 
 \end{proof}

This gives an immediate proof of the following result that we use in the paper. %

\begin{cor}
\label{app:cor5}
Consider the following two processes: 
\bb %
\Q^{x, \mathrm{bb}}: && 
\d Z_t = \left ( \sigma_t^2\frac{x-Z_t}{\beta_\t - \beta_t} \right) \dt + \sigma_t \d W_t,~~~~ Z_0 \sim \mu_0, \\ 
\Q^{x, \mathrm{bb}, f}: && 
\d Z_t = \left ( \sigma_t f_t(Z_t) + \sigma_t^2\frac{x-Z_t}{\beta_\t - \beta_t} \right) \dt + \sigma_t \d W_t,~~~~ Z_0 \sim \mu_0.
\ee  
Assume $\E_{\Q^{x, \mathrm{bb}, f}} [\norm{f_t(Z_t)}^2] <+\infty$ and $\sigma_t > 0$ for $t\in[0,+\infty)$. Then $\Q^{x, \mathrm{bb}, f}$ is a bridge to $x$.  
\end{cor}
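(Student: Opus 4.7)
The plan is to obtain this corollary as a direct composition of two ingredients already developed above: the Lyapunov argument establishing that the unperturbed Brownian bridge $\Q^{x,\mathrm{bb}}$ is itself a bridge to $x$, and the Girsanov-type perturbation lemma (Proposition \ref{thm:perturb}) which guarantees the bridge property is preserved under the addition of a non-singular drift. The logic is short: first show the bridge condition holds for $\Q^{x,\mathrm{bb}}$, then transfer it to $\Q^{x,\mathrm{bb},f}$ via absolute continuity.

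For the first ingredient, I would invoke Theorem \ref{thm:lyap2} (or equivalently the immediately preceding corollary on time-scaled Brownian bridges) with the Lyapunov function $U_t(z) = \norm{x - z}^2/2$ and drift $b_t(z) = \sigma_t^2(x-z)/(\beta_\T - \beta_t)$. A direct computation gives $\dd_z U_t(z) \tt b_t(z) = -\alpha_t U_t(z)$ with $\alpha_t = 2\sigma_t^2/(\beta_\T - \beta_t)$, and $\partial_t U_t + \frac{1}{2}\trace(\dd_z^2 U_t\,\sigma_t^2 I) = \frac{d}{2}\sigma_t^2 =: \gamma_t$. Since $\sigma_t > 0$ and $\beta_\T - \beta_t \downarrow 0$, the integrating factor $\zeta_t = \exp(\int_0^t \alpha_s\,\d s)$ diverges as $t \uparrow \T$ fast enough to dominate $\int_0^t \zeta_s \gamma_s\,\d s$, so $\Q^{x,\mathrm{bb}}(Z_\T = x) = 1$.

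The second and main step is to apply Proposition \ref{thm:perturb} with $\Q = \Q^{x,\mathrm{bb}}$ and $\tilde \Q = \Q^{x,\mathrm{bb},f}$. Both laws share the same initial distribution $\mu_0$, so the KL between initializations vanishes; the drifts differ exactly by $\sigma_t f_t(Z_t)$, matching the additive form assumed by the proposition; and the integrability hypothesis $\E[\norm{f_t(Z_t)}^2] < \infty$ is precisely what is required to ensure that the Radon--Nikodym density given by Girsanov's theorem is well defined and yields finite relative entropy on path space. Taking the measurable event $A = \{Z_\T = x\}$, Proposition \ref{thm:perturb} concludes $\Q^{x,\mathrm{bb},f}(Z_\T = x) = 1$.

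The main technical point I would be careful about is reconciling the precise form of the integrability hypothesis: the corollary states $\E_{\Q^{x,\mathrm{bb},f}}[\norm{f_t(Z_t)}^2] < \infty$, whereas Proposition \ref{thm:perturb} is phrased with a time-integrated expectation under the reference measure $\Q^{x,\mathrm{bb}}$. These are equivalent under mild regularity (a uniform-in-$t$ bound together with a bounded time horizon, or a standard localization argument on the Girsanov density), and I would make that transfer explicit in the write-up. The positivity assumption $\sigma_t > 0$ is used only to ensure that the additional drift $\sigma_t f_t$ is a legitimate Girsanov perturbation rather than a singular change of measure; it plays no deeper role.
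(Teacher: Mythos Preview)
Your proposal is correct and follows exactly the paper's route: the unperturbed process $\Q^{x,\mathrm{bb}}$ is shown to be a bridge via the Lyapunov corollary, and Proposition~\ref{thm:perturb} is then invoked to transfer the bridge property to $\Q^{x,\mathrm{bb},f}$. Your remark about the integrability hypothesis being stated under $\Q^{x,\mathrm{bb},f}$ rather than under $\Q^{x,\mathrm{bb}}$ catches a genuine inconsistency that the paper itself glosses over (indeed the main text, unlike the appendix statement, writes the condition under $\Q^{x,\mathrm{bb}}$, matching what Proposition~\ref{thm:perturb} actually requires).
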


\section{Model Details}

\subsection{Model Architecture for Molecule Generation.}
Following EGM \cite{hoogeboom2022equivariant},
we apply an E(3) equivariant GNN network (EGNN) as our basic model architecture.
EGNNs are a type of graph neural networks that satisfies the equivariance constraint,
\begin{align}
   {\bf R} x' + {\bf t}, h'  = f({\bf R} x + {\bf t}, h)~ ~~\textit{when}~~~~  x', h' = f(x, h),
\end{align}
where $x$ and $h$ represent the 3D coordinates and additional features, orthogonal $\bf{R}$ stands for the random rotation and $\bf{t} \in \mathbb{R}^3$ is a random transformation.
One EGNN is usually made up of multiple stacked equivariant graph convolutional layers (EGCL), and every EGCL satisfies the equivariance constraint.
Denote $N$ the number of nodes, $x^l$ and $h^l$ the coordinates and features for layer $l \in \{0, \cdots, L\}$,
we have
\begin{align}
    m_{ij} & = \phi_e(h^l_i, h^l_j, d_{ij}),  \\ \notag
    h^{l+1}_i & = \phi_h(h^l_i, \{m_{ij}\}_{j=1}^{N}), \\ \notag
    x^{l+1}_i & = x^l_i + \sum_{j \neq i} \frac{x^l_i - x^l_j}{d+1} \phi_x(h^l_i, h^l_j, d_{ij}),
\end{align}
where $h^0 = h, x^0 = x$, $d_{ij} = \| x^l_i - x^l_j\|_2$, $d_{ij}+1$ is introduced to improve training stability, and $\phi_e, \phi_h, \phi_x$ represents fully connected neural network with learnable parameters.
We refer the readers to the previous paper \cite{satorras2021n} for more details.

\paragraph{Scaling Features}
Following \cite{hoogeboom2022equivariant}, we re-scale the data with additional scaling factors.
The atom type one-hot vector and atom charge value $\times .25$ and $\times 0.1$, respectively.
It significantly improves
performance over non-scaled inputs, e.g. $47\%$ relative improvements on molecule stability.

\subsection{Model Architecture for Point Cloud Generation.}
We build up our network based on the setup in point cloud diffusion work~\cite{luo2021diffusion} without extra modification for a fair comparison. The model consists two parts. The first part is a flow model that learns the shape prior and the second part takes the shape prior and the noisy point coordinates into a MLP style encoder as the denoise function. We refer the readers to the previous paper \cite{luo2021diffusion} for more details.

\section{More Visualization for Point Cloud Generation}

Below we show more visualization of our point cloud generation result in both chair and airplane class. We focus on presenting our best performance Bridge-Statistic visualization in Figure~\ref{fig:app}.

\begin{figure}
    \centering
    \includegraphics[width=1.0\textwidth]{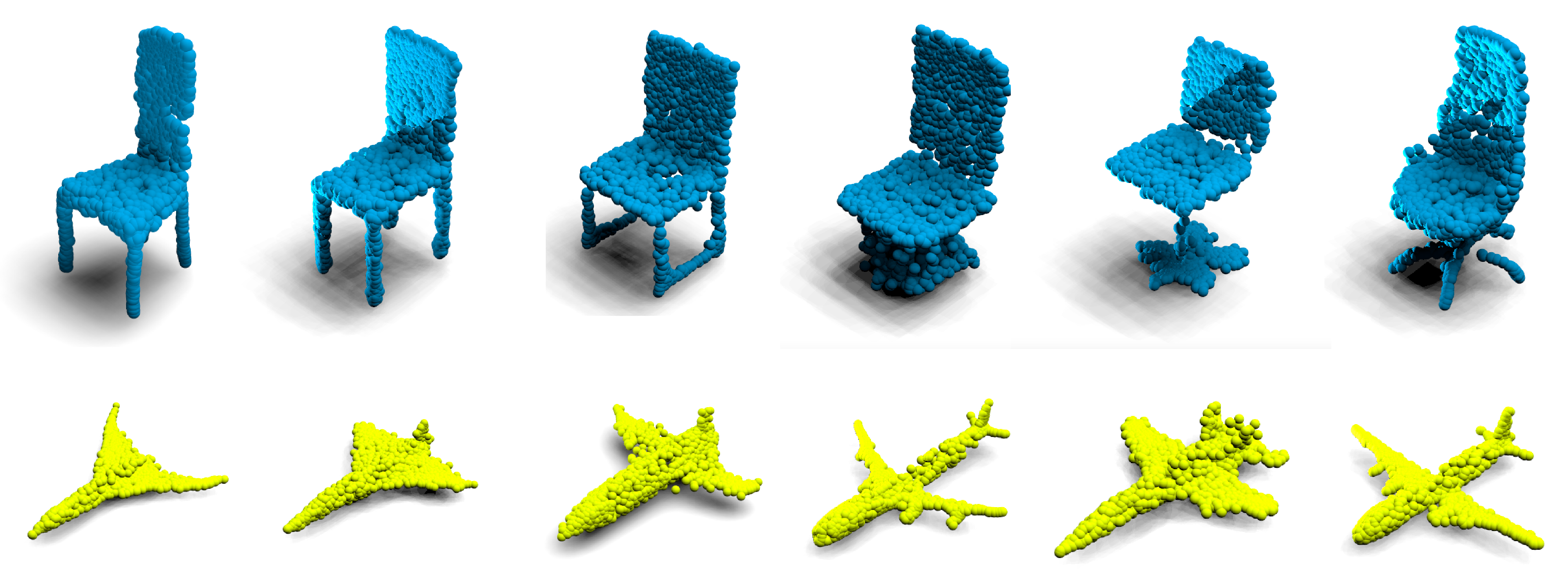}
    \caption{More visualization result of our Bridge-Statistic method, the upper row is chair category and the lower row is airplane category}
    \label{fig:app}
\end{figure}

\end{document}